\def\eqref#1{equation~\ref{#1}}
\def\1{\bm{1}}
\def\vb{{\bm{b}}}
\def\vx{{\bm{x}}}
\def\vy{{\bm{y}}}
\def\vz{{\bm{z}}}
\def\mA{{\bm{A}}}
\def\mI{{\bm{I}}}
\DeclareMathAlphabet{\mathsfit}{\encodingdefault}{\sfdefault}{m}{sl}
\SetMathAlphabet{\mathsfit}{bold}{\encodingdefault}{\sfdefault}{bx}{n}
\def\sD{{\mathbb{D}}}
\def\sR{{\mathbb{R}}}
\newcommand{\E}{\mathbb{E}}
\newcommand{\KL}{D_{\mathrm{KL}}}
\DeclareMathOperator*{\argmax}{arg\,max}
\newtheorem{theorem}{Theorem}
\newcommand{\CIGMO}{\textsc{CIGMO}}
\newcommand{\gaussd}{\mathcal{N}}
\newcommand{\lowerb}{\mathcal{L}}
\newcommand{\diag}{\mathop{\mathrm{diag}}}
\newcommand{\todo}[1]{}
\title{CIGMO: Categorical invariant representations in a deep generative framework}
\author[1]{\href{mailto:<hosoya@atr.jp>?Subject=Your UAI 2022 paper}{Haruo~Hosoya}{}}
\affil[1]{%
    Brain Labs.\\
    ATR International\\
    Kyoto, Japan
    }
\begin{document}
\maketitle

\begin{abstract}
Data of general object images have two most common structures: (1) each object of a given shape can be rendered in multiple different views, and (2) shapes of objects can be categorized in such a way that the diversity of shapes is much larger across categories than within a category.  Existing deep generative models can typically capture either structure, but not both.  In this work, we introduce a novel deep generative model, called CIGMO, that can learn to represent category, shape, and view factors from image data.  The model is comprised of multiple modules of shape representations that are each specialized to a particular category and disentangled from view representation, and can be learned using a group-based weakly supervised learning method.  By empirical investigation, we show that our model can effectively discover categories of object shapes despite large view variation and quantitatively supersede various previous methods including the state-of-the-art invariant clustering algorithm.  Further, we show that our approach using category-specialization can enhance the learned shape representation to better perform down-stream tasks such as one-shot object identification as well as shape-view disentanglement.

\end{abstract}

\section{Introduction}
\label{sec:intro}

In everyday life, we see objects in a great variety.  Categories of objects are numerous and their shape variations are tremendously rich; different views make an object look totally different (Figure~\ref{fig:object}).  Recent neuroscientific studies document how the primate visual system represents such complex objects in a characteristic, modular architecture, which is comprised of multiple cortical regions that each encode invariant features specialized to a particular object category, such as faces \citep{Freiwald2010}, body parts \citep{Kumar2017}, and other general categories \citep{Srihasam2014,Bao2020}.  Our work here takes inspiration from these biological findings for developing a novel learning model with ``categorical invariant representation.''

\begin{figure}[t]
\begin{center}
\includegraphics[width=6.5cm]{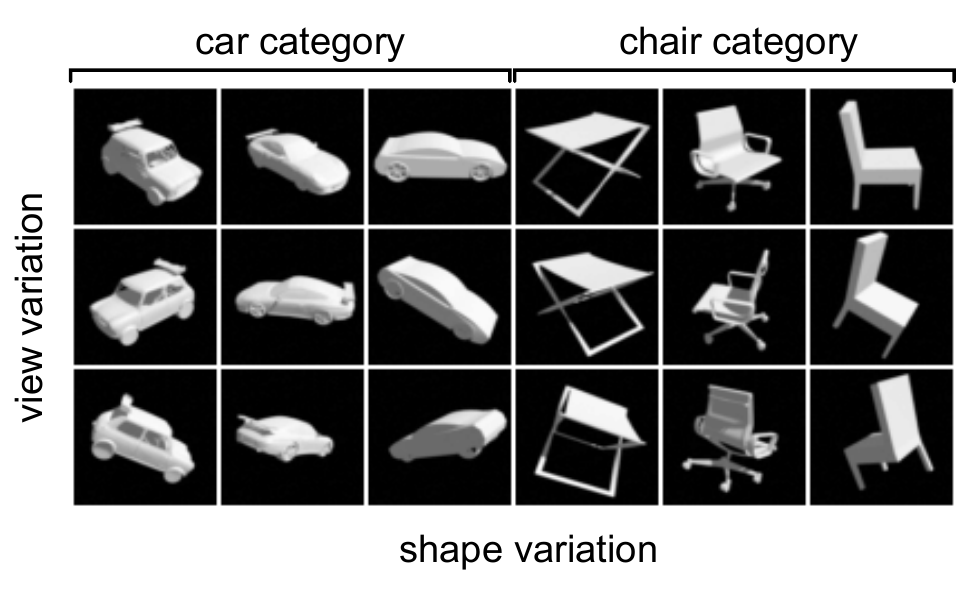}
\caption{Examples of general object images.  These include two categories (car and chair) each with three shape variations.  The object of each shape is rendered in three different views. }
\label{fig:object}
\end{center}
\end{figure}

\begin{figure*}[t]
\begin{center}
\includegraphics[width=15cm]{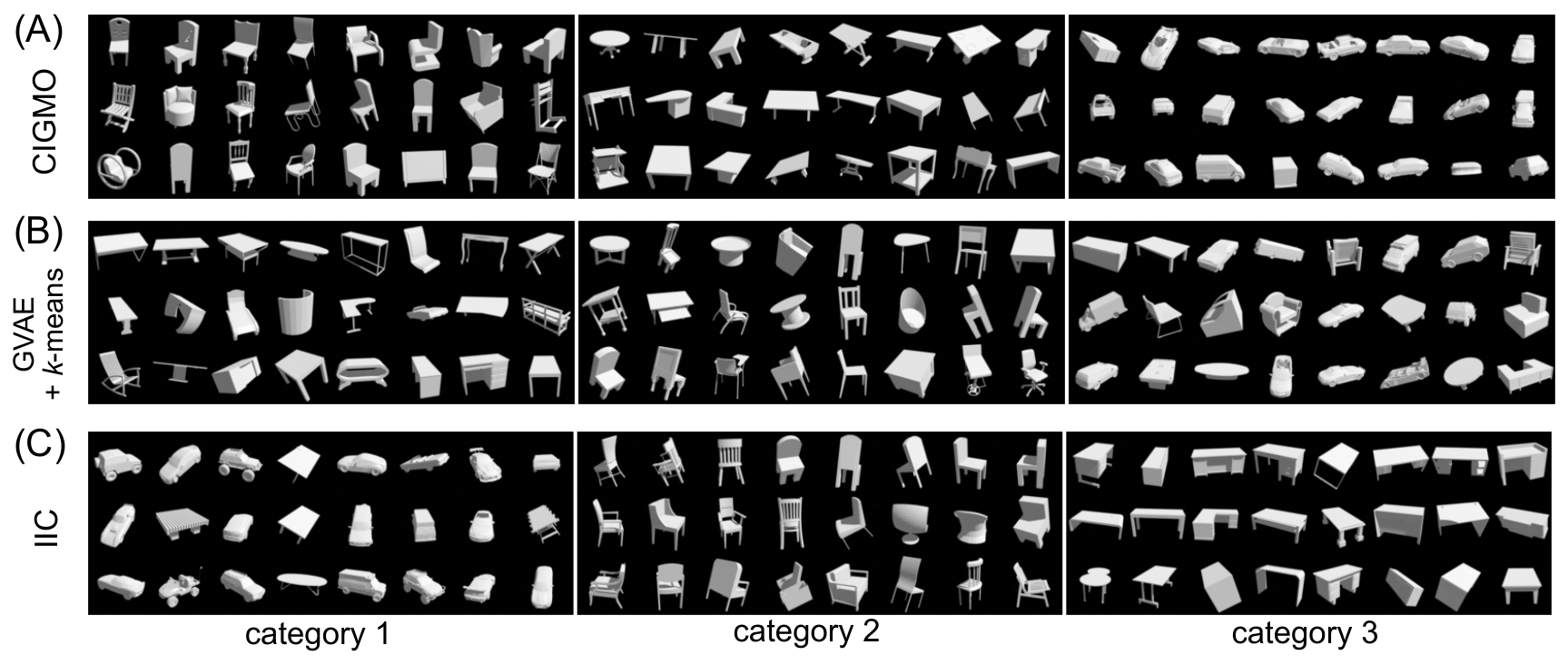}
\caption{%
Examples of invariant clustering from (A) {\CIGMO}, (B) GVAE \citep{Hosoya2019} with $k$-means, and (C) IIC \citep{Ji2019}, for ShapeNet, in the case of 3 categories.  Random 24 test images belonging to each estimated category are shown in a box.  Note that the categories almost perfectly correspond to the chair, table, car image classes in (A).  Such correspondence is much less clear in (B) and (C); in particular, cars are mixed with many other objects (category 3 in (B) and category 1 in (C)).  
}
\label{fig:clustering}
\end{center}
\end{figure*}

To be more specific, consider two most common domain structures of general object images.  First, 
each object has a specific shape and can be rendered in multiple different views (e.g., 3D orientation) independently of shape.  Second, the shapes of objects can be categorized in such a way that the diversity of shapes is much larger across categories than within a category.  For example, Figure~\ref{fig:object} illustrates various object images of car and chair categories, in which the shape of a particular car type is relatively similar to the shape of another car type but is substantially different from any chair type.  Existing deep generative models typically capture either structure.  The first structure on view variation is often handled by disentangling deep models, i.e., representation learning of mutually invariant factors of variation in the input \citep[etc.]{Bengio2013,Higgins2016,Chen2016,Bouchacourt2018,Hosoya2019,Mathieu2016}, while the second structure on categorization can be handled by deep clustering methods \citep{Jiang2017,Ji2019}.  However, models that capture both structures in a single generative framework have rarely been studied.

In this work, to exploit both domain structures, we develop a probabilistic deep generative model that learns to encode three latent factors, namely, (1) category, (2) shape, and (3) view, from a dataset of general object images.  However, this goal is generally difficult in a completely unsupervised setting, thus requiring some kind of ``inductive bias'' to be imposed \citep{Locatello2020b}.  To this end, we start with recently emerging, group-based weakly supervised learning \citep{Mathieu2016,Bouchacourt2018,Chen2018,Hosoya2019}, which can learn separate representations of shape and view from object images using no explicit labels, but only grouping information that links together different views of the same object.  We extend this approach by introducing multiple modules of shape representations and then devising mechanisms to specialize each shape representation to a particular object category while disentangling it from view representation.  We call the resulting model {\CIGMO} (Categorical Invariant Generative MOdel).

We have empirically investigated representational advantages of our model.  First, we found that our model allows for effectively solving the invariant clustering problem, that is, it can discover categories of object shapes in an unsupervised manner despite significant variation of object views.  Our model can quantitatively outperform various previous methods including the state-of-the-art invariant clustering method \citep{Ji2019} as well as combination of existing disentangling and clustering methods; Figure~\ref{fig:clustering} shows demonstrating examples.  Second, we found that our approach using category-specialization can enhance the learned shape representation to perform better multiple tasks, including one-shot identification (object recognition given one example per shape) as well as shape-view disentanglement in multiple criteria.  Thus, we propose {\CIGMO} as a novel learning approach to represent object images in a general manner using category, shape, and view latent variables, which can provide more precise information for down-stream tasks than typical approaches to represent only part of those variables. 
The source code written in {\tt pytorch} is available at {\tt https://github.com/HaruoHosoya/cigmo}.

\section{Related work}
\label{sec:related}

A number of studies exist for disentangling deep generative models.  A particularly relevant technique is the group-based disentangling approach that can learn to separate content (shape) and view variables from grouped data \citep{Mathieu2016,Bouchacourt2018,Chen2018,Hosoya2019,pmlr-v119-locatello20a}.  However, these studies most often use image datasets of a single object category, paying no attention to the large cross-category diversity of object shapes mentioned in the introduction.  Our model can be seen as a generalization of this approach with categorization, where our specific contribution is how to handle grouped data in the presence of multiple categories (Section~\ref{sec:learning}); such issue would not arise in a non-group-based setting, e.g., \citep{Kingma2014a}. 

Although our focus here is weak supervision, we should mention that there are various approaches to use explicit labels for enhancing disentangling performance, such as semi-supervised learning \citep{Kingma2014a,Siddharth2017} or adversarial learning to promote disentanglement \citep{Lample2017,Mathieu2016}.  Also, as group-based learning was originally inspired by temporal coherence principle \citep{Foldiak1991} (i.e., the object identity is often stable over time), some weakly supervised disentangling approaches have explicitly used it \citep{Yang2015}.  

Some studies have proposed unsupervised disentangling algorithms that impose statistical constraints on the latent representation such as information maximization \citep{Higgins2016,Chen2016,Dupont2018}.  However, we emphasize that the problem that they try to solve is very different from ours.  That is, their models can learn arbitrarily many continuous variables that are each single-dimensional, whereas our model (as well as all other disentangling models cited so far) can learn exactly two continuous variables that are each multi-dimensional (shape and view).

Our study is also related to recent deep clustering methods.  In particular, a latest approach proposes group-based learning for invariant clustering, which maximizes mutual information between categorical posterior distributions for paired images \citep{Ji2019}.  This method has exhibited remarkable performance on natural images under various view variation; Section~\ref{sec:exper} gives an empirical comparison with our method. 
Although earlier work combines variational autoencoders (VAE) \citep{Kingma2014} with a conventional clustering method (e.g., Gaussian mixture), such approach seems to be limited in capturing large object view variation \citep{Jiang2017}.   In any case, these methods are specialized to clustering and throw away all information other than the category.

The group-based learning is loosely related to contrastive learning \citep{Jaiswal2020a}, which is a self-supervised learning approach to make representations of ``positive'' data pairs near (e.g., objects of the same shape) and those of ``negative pairs'' distant (e.g., objects of different shapes).  Such negative data (not used in our study) could help drastically improve performance, though it incurs substantial technical complication (large batch size, memory bank, etc.).  

\todo{Other weakly related work.  computer vision models... }

\section{CIGMO: Categorical Invariant Generative Model}
\label{sec:cigmo}

\begin{figure}[t]
\begin{center}
\includegraphics[width=8cm]{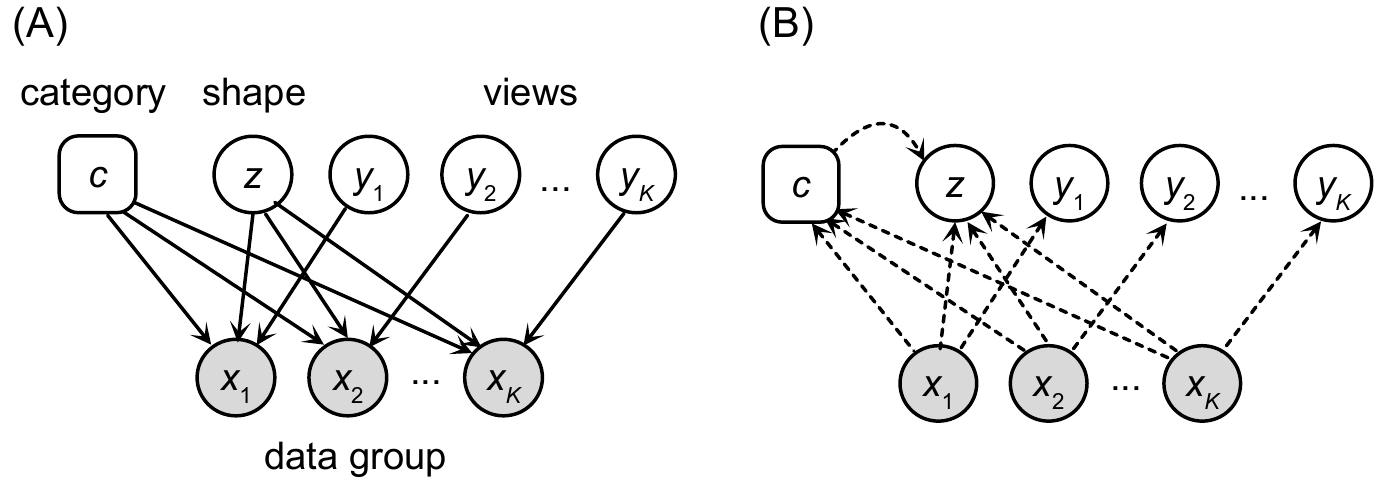}
\caption{ (A) The graphical model.  Each instance $x_k$ in a data group is generated from a category $c$, a shape $z$, and a view $y_k$.  Round boxes are discrete variables; circles are continuous variables; shaded are visible variables.
(B) The inference flow.  Each hidden variable is inferred from the set of incoming variables. 
}
\label{fig:model}
\end{center}
\end{figure}

\subsection{Model}
\label{sec:model}

In our model construction, similarly to the previous group-based disentangling approaches \citep{Mathieu2016,Bouchacourt2018,Hosoya2019,Chen2018,pmlr-v119-locatello20a}, we assume a dataset that groups together multiple object images of the same shape but possibly in different views.  Here, shape refers to the property of object that is invariant in view, where view is defined depending on the dataset.  For example, 3-dimensional viewpoints are considered as views in the object images in Figure~\ref{fig:object} (columns).  Given such dataset, we can extract the shape as a group-common factor and the view as an instance-specific factor.  
In this study here, we generalize this idea with category-modular representation.  That is, we assume multiple object categories each of which includes a distinct and potentially infinite set of shapes.  Since the variety of shapes can be category-specific (e.g., shapes of chairs vary in a different way from those of cars), we endow each category with its own specialized representation (i.e., separate encoders and decoders).

Formally, we assume a grouped dataset 
$
	\sD=\{ (\vx_1^{(n)},\ldots,\vx_K^{(n)}) \mid \vx_k^{(n)}\in \sR^D, n=1,\ldots,N\}
$,	
where each data point is a group (tuple) of $K$ data instances (e.g., images); we assume that the groups are i.i.d.  For a data group $(\vx_1,\ldots,\vx_K)$, we consider three types of hidden variables:  category $c\in\{1,\ldots,C\}$, shape $\vz\in\sR^M$, and views $\vy_1,\ldots,\vy_K\in\sR^L$ (omitting the superscript $(n)$), where the category and shape are common for the group while the views are specific to each instance.  We consider the following generative model (Figure~\ref{fig:model}(A)):
\begin{align*}
	p(c) &= \pi_c  \\
	p(\vz) &= \gaussd_M(0,\mI) \\
	p(\vy_k) &= \gaussd_L(0,\mI) \\
	p(\vx_k|\vy_k,\vz,c) &= \gaussd_D( f_c(\vy_k,\vz), \mI)
\end{align*}
for $c=1,\ldots,C$ and $k=1,\ldots,K$.  Here, $\pi_c$ is a category prior with $\sum_{c=1}^C \pi_c=1$ and $f_c$ is a decoder deep net defined for category $c$ corresponding to the category-specific module of shape representation.  In the generative process, the category $c$ is first drawn from the categorical distribution $(\pi_1,\ldots,\pi_C)$, while the shape $\vz$ and views $\vy_k$ are drawn from standard Gaussian priors.   Then, each data instance $\vx_k$ is generated by the decoder $f_c$ for the selected category $c$, which is applied to the group-common shape $\vz$ and the instance-specific view $\vy_k$ (added with Gaussian noise of unit variance).  In other words, different data instances for a group are generated from the same shape and different views.

\subsection{Learning}
\label{sec:learning}

Having defined a generative model as above, we expect category, shape, and view representations to arise as latent variables after fitting the model to a grouped dataset.  To derive a concrete algorithm, we use variational autoencoders \citep{Kingma2014} as a basic methodology, where we develop a specific architecture to solve our particular problem.  

As the most important step, we specify inference models to encode approximate posterior distributions (Figure~\ref{fig:model}(B)).  %
First, we estimate the posterior probability for category $c$ as follows:
\begin{align}
	q(c|\vx_1,\ldots,\vx_K) &= \frac{1}{K}\sum_{k=1}^K u^{(c)}(\vx_k) \label{eq:cat-inf}
\end{align}
Here, $u$ is a categorizer deep net that computes, for an individual instance $\vx_k$, a probability distribution over the categories ($\sum_{c=1}^C u^{(c)}(\vx_k)=1$) \citep{Kingma2014a}.  Since we have $K$ such instances, we take the average over the instance-specific distributions to obtain the group-common distribution.  The averaging operation is justified as estimation of the expected probability of each category within the given group.  

Next, we infer the posterior for each instance-specific view $\vy_k$ from the input $\vx_k$ as follows:
\begin{align}
	q(\vy_k|\vx_k) &= \gaussd_L\left( g(\vx_k), \diag(r(\vx_k))\right) \label{eq:gvae-inf1} 
\end{align}
where $g$ and $r$ are encoder deep nets to specify the mean and variance, respectively.  
Here, we use the view representation that does not depend on the category, assuming a ``universal'' view space.
Then, we estimate the posterior for group-common shape $\vz$ from inputs $\vx_1,\ldots,\vx_K$ as follows: 
\begin{align}
	&q(\vz|\vx_1,\ldots,\vx_K,c) = \nonumber \\ 
	&\quad
	\gaussd_M\left( 
		\frac{1}{K} \sum_{k=1}^K h_c(\vx_k), 
		\frac{1}{K} \sum_{k=1}^K \diag(s_c(\vx_k))
	\right) \label{eq:gvae-inf2} 
\end{align}
This time, shape representation does depend on the category, unlike views.  Thus, the encoder deep nets $h_c$ and $s_c$ are defined for each category $c$, which compute the mean and variance, respectively, for each individual shape for $\vx_k$.  We then obtain the group-common shape $\vz$ as the average over all the individual shapes \citep{Hosoya2019}.  %

For training, we define the following variational lower bound of the marginal log likelihood for a data point:
$
	\lowerb(\phi; \vec{\vx}) = \lowerb_\mathrm{recon} + \lowerb_\mathrm{KL}
$
with
\begin{align*}
	\lowerb_\mathrm{recon} 
	&= \E_{q(\vec{\vy},\vz,c|\vec{\vx})}\left[\sum_{k=1}^K \log p(\vx_k|\vy_k,\vz,c)\right] \\
	\lowerb_\mathrm{KL}
	&= - \KL( q(\vec{\vy},\vz,c | \vec{\vx}) \Vert p(\vec{\vy},\vz,c) ) \end{align*}
where $\vec{\vx}$ stands for $(\vx_1,\ldots,\vx_K)$, etc., and $\phi$ is the set of all weight parameters in the categorizer, encoder, and decoder deep nets.  We compute the reconstruction term $\lowerb_\mathrm{recon}$ as follows:
\begin{align*}
	\lowerb_\mathrm{recon} 
	&= \sum_{c=1}^C q(c|\vec{\vx}) \E_{q(\vec{\vy},\vz|\vec{\vx},c)}
	\left[\sum_{k=1}^K \log p(\vx_k|\vy_k,\vz,c)\right] \\
	&\approx \sum_{c=1}^C q(c|\vec{\vx}) \sum_{k=1}^K \log p(\vx_k|\vy_k,\vz,c)
\end{align*}
where we approximate the expectation using one sample $\vz \sim q(\vz|\vec{\vx})$  %
and $\vy_k \sim q(\vy_k|\vx_k,c)$ for each $k$, but directly use the probability value $q(c|\vec{\vx})$ for $c$.  The latter is crucial for making the loss function differentiable.
The KL term $\lowerb_\mathrm{KL}$ is computed as follows:
\begin{align}
	\lowerb_\mathrm{KL}
	=& - \KL( q(c|\vec{\vx}) \Vert p(c) ) \nonumber \\
	  & - \sum_{k=1}^K \KL( q(\vy_k | \vx_k) \Vert p(\vy_k) ) \nonumber \\
	  & - \sum_{c=1}^C q(c|\vec{\vx}) 
	     \KL( q(\vz | \vec{\vx},c) \Vert p(\vz) ) \nonumber %
\end{align}
Finally, our training procedure is to maximize the lower bound for the entire dataset with respect to the weight parameters: 
$\hat{\phi}=\argmax_\phi \frac{1}{N}\sum_{n=1}^{N}\lowerb(\phi; \vx_1^{(n)},\ldots,\vx_K^{(n)})$.  

Figure~\ref{fig:algo} depicts the outline of our learning algorithm.  We emphasize here that, even though the architecture has rather intertwined interaction between grouping and categorization, our model can be formalized elegantly in a probabilistic generative framework, can be trained in an end-to-end manner only using a grouped dataset and no explicit label for category and so on (nor any pre-trained model), and can perform well for multiple down-stream tasks (Section~\ref{sec:exper}).

\begin{figure}[t]
\begin{center}
\includegraphics[width=8cm]{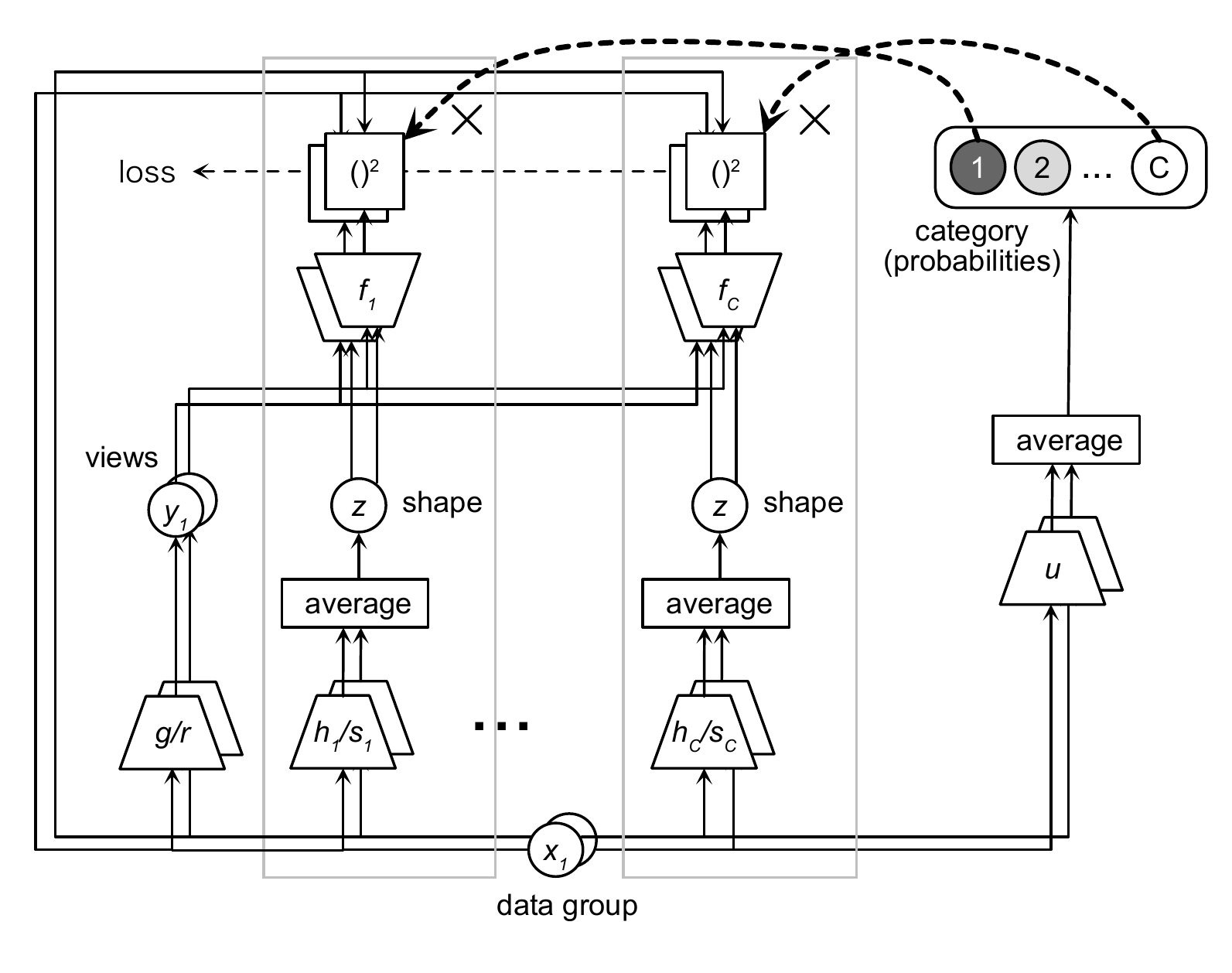}
\caption{ A diagrammatic outline of {\CIGMO} learning algorithm.  %
The entire workflow consists of three kinds of networks corresponding to views (left-most), category (right-most), and shapes in $C$ modules.  Given an input group of instances $x_k$ (bottom), an instance-specific view $y_k$ is computed by encoders $g$ and $r$.  In each module $c$, a group-common shape $z$ is computed by encoders $h_c$ and $s_c$ followed by averaging.  Then, new data instances generated by decoder $f_c$ from each shape and view are compared with the original data for obtaining the reconstruction error (loss).  This process is repeated for all modules.  In parallel, the posterior probability for category $c$ is computed by the categorizer $u$ followed by averaging and multiplied with the reconstruction error for the corresponding module.   Note that other probabilistic mechanisms (e.g., priors) are omitted here for brevity.
}
\label{fig:algo}
\end{center}
\end{figure}

\subsection{Model variants}
\label{sec:variants}

The above construction is only one approach to categorical invariant generative models.  There can indeed be a number of variants.  First, instead of combining categorical distributions by averaging as in \eqref{eq:cat-inf}, we could take the (normalized) product:
$q(c|\vx_1,\ldots,\vx_K) \propto \prod_{k=1}^K u^{(c)}(\vx_k)$
(which can be interpreted as evidence accumulation); or taking the softmax on the averaged logits:
$q(c|\vx_1,\ldots,\vx_K) \propto \exp\left(\frac{1}{K}\sum_{k=1}^K \tilde{u}^{(c)}(\vx_k))\right)$
where $\tilde{u}^{(c)}$ is the logit of $u^{(c)}$ (which estimates expectation at the level of neural network outputs).  Second, instead of the ``universal'' view space as in \eqref{eq:gvae-inf1}, we could consider a category-dependent view representation:
$q(\vy_k|\vx_k,c) = \gaussd_L\left( g_c(\vx_k), \diag(r_c(\vx_k))\right)$
using encoders $g_c$ and $r_c$ defined for each category $c$.  This is sensible since, e.g., frontal views for chairs could arguably mean differently from frontal views for tables.  For these two, Supplementary Materials make empirical comparison, showing that our original approach in Section~\ref{sec:learning} gives slightly better performance than the variants.  

In addition, we can think of a relatively different approach that takes an existing group-based disentangling model \citep{Bouchacourt2018,Hosoya2019,pmlr-v119-locatello20a}, but adds Gaussian mixture prior on the shape variable.
Formally, assume
\begin{align*}
	p(c) &= \pi_c  \\
	p(\vz'|c) &= \gaussd_M(\vb_c,\mA_c) \\
	p(\vy_k) &= \gaussd_L(0,\mI) \\
	p(\vx_k|\vy_k,\vz') &= \gaussd_D( f(\vy_k,\vz'), \mI)
\end{align*}
where $\vb_c$ and $\mA_c$ are a mean vector and covariance matrix, respectively, corresponding to category $c$; $f$ is a deep net decoder (not depending on the category).  However, the following can be proved.
\begin{theorem}\label{th:gvae-gm}
  Group-based disentangling models with Gaussian mixture prior are a special case of {\CIGMO}.
\end{theorem}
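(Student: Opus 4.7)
The plan is to show that for any group-based disentangling model with Gaussian mixture prior on the shape (call it the GM-model), there exists a {\CIGMO} instance that induces the same joint distribution over $(c,\vx_1,\ldots,\vx_K)$. The key observation is that the category-specific mean $\vb_c$ and covariance $\mA_c$ of the shape prior in the GM-model can be absorbed into a category-specific {\CIGMO} decoder via the standard Gaussian reparameterization, because {\CIGMO} places no structural restriction on $f_c$ beyond its being an arbitrary deep net.

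Concretely, I would let $\mA_c^{1/2}$ denote any matrix square root (e.g.\ the Cholesky factor) and, given the GM-model's shared decoder $f$, define the {\CIGMO} decoders by $f_c(\vy,\vz) := f(\vy,\,\vb_c + \mA_c^{1/2}\vz)$ for each category $c$. With this choice, the {\CIGMO} observation model $p(\vx_k\mid\vy_k,\vz,c) = \gaussd_D(f_c(\vy_k,\vz),\mI)$ is by construction the push-forward of the GM-model observation density $\gaussd_D(f(\vy_k,\vz'),\mI)$ under the deterministic affine map $\vz' = \vb_c + \mA_c^{1/2}\vz$. Next, conditioning on $c$ and applying a standard change of variables, the fact that $\vz\sim\gaussd_M(0,\mI)$ in {\CIGMO} pushes forward to exactly $\gaussd_M(\vb_c,\mA_c)$ on $\vz'$, which is the GM-model's shape prior. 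Consequently,
\[
\int \prod_{k=1}^K p(\vx_k\mid\vy_k,\vz,c)\,p(\vz)\,d\vz \;=\; \int \prod_{k=1}^K p(\vx_k\mid\vy_k,\vz')\,p(\vz'\mid c)\,d\vz'.
\]
Combined with the identical category prior $p(c)=\pi_c$ and identical view priors $p(\vy_k)=\gaussd_L(0,\mI)$, marginalizing over $\vy_1,\ldots,\vy_K$ then yields equality of the joint $p(c,\vx_1,\ldots,\vx_K)$ across the two models.

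The only delicate point is clarifying what \emph{special case} means here: the statement amounts to showing that the family of GM-models embeds into the family of {\CIGMO} models at the level of the joint distribution over observables and category labels, with the embedding given explicitly by the map $f \mapsto \{f_c\}_{c=1}^C$ above. There is no real technical obstacle once the reparameterization is written down, because the freedom to pick arbitrary category-specific decoders in {\CIGMO} trivially subsumes an affine reparameterization of a single shared decoder's shape argument. (The converse embedding need not hold, since distinct $f_c$'s in {\CIGMO} cannot in general be expressed as $f(\vy,\vb_c+\mA_c^{1/2}\vz)$ for a common $f$, which is what makes {\CIGMO} strictly more expressive.)
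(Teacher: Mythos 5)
Your proposal is correct and uses exactly the same reparameterization as the paper: absorbing the mixture component's mean and covariance into a category-specific decoder via $f_c(\vy,\vz) = f(\vy, \mA_c^{1/2}\vz + \vb_c)$, so that the standard normal shape prior pushes forward to $\gaussd_M(\vb_c,\mA_c)$. The paper states this as a one-line change of variables; you have simply spelled out the push-forward and marginalization steps that make it rigorous.
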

\begin{proof}
The result immediately follows by letting $\vz=\mA_c^{-\frac{1}{2}} (\vz' - \vb_c)$ and 
$f_c(\vy_k,\vz) = f(\vy_k, \mA_c^{\frac{1}{2}} \vz + \vb_c)$.
\end{proof}
This corresponds to the specific architecture in {\CIGMO} that shares the part of decoder networks after the first fully connected layer.  In fact, we precisely use such architecture in the experiments (Supplementary Materials).  %

\newcommand{\casetype}{\# of categories}
\newcommand{\chancelevel}{chance level}
\newcommand{\methIIC}{IIC}
\newcommand{\methIICoc}{IIC (overclust.)}
\newcommand{\methVAE}{VAE + $k$-means}
\newcommand{\methMVAE}{Mixture of VAEs}
\newcommand{\methGVAE}{GVAE + $k$-means}
\newcommand{\methMGVAE}{{\bf\CIGMO}}
\newcommand{\methMLVAE}{MLVAE + $k$-means}

\section{Experiments}
\label{sec:exper}

We have applied {\CIGMO} as described in Section~\ref{sec:cigmo} to two image datasets: ShapeNet (general objects) and MVC Cloth (cloth images).  Below, we outline the experimental set-up and show the results.

\subsection{ShapeNet}
\label{sec:shapenet}

\begin{table*}[t]
\caption{Invariant clustering accuracy (\%) for ShapeNet.  The mean and SD over 10 model instances are shown; the best method with the highest mean score is shown in the bold font; the $*$-mark indicates statistical significance relative to the best method (t-test; $p<0.05$); ditto for the remaining tables.  %
}
\fontsize{9pt}{10.5pt}\selectfont
\begin{center}
\begin{tabular}{l@{\;\;}cccc}
\toprule%
Methods & 3 classes & 5 classes & 10 classes \\
\midrule%
\chancelevel & \textrm{33.33}  & \textrm{20.00}  & \textrm{10.00} \\
\methIIC & \textrm{85.25} $\pm$ 13.74 & \textrm{81.10} $\pm$ 7.33$^{*}$ & \textrm{60.84} $\pm$ 1.45$^{*}$ \\
\methIICoc & \textrm{79.86} $\pm$ 13.78$^{*}$ & \textrm{81.87} $\pm$ 4.57$^{*}$ & \textrm{59.73} $\pm$ 1.49$^{*}$ \\
\methVAE & \textrm{66.41} $\pm$ 5.69$^{*}$ & \textrm{50.83} $\pm$ 3.85$^{*}$ & \textrm{37.07} $\pm$ 1.00$^{*}$ \\
\methMVAE & \textrm{82.35} $\pm$ 5.66$^{*}$ & \textrm{65.73} $\pm$ 6.24$^{*}$ & \textrm{40.86} $\pm$ 3.58$^{*}$ \\
\methMLVAE & \textrm{82.04} $\pm$ 7.78$^{*}$ & \textrm{70.68} $\pm$ 5.04$^{*}$ & \textrm{54.47} $\pm$ 1.92$^{*}$ \\
\methGVAE & \textrm{73.20} $\pm$ 10.93$^{*}$ & \textrm{69.42} $\pm$ 3.47$^{*}$ & \textrm{52.55} $\pm$ 2.74$^{*}$ \\
\methMGVAE & \textbf{94.83} $\pm$ 6.06 & \textbf{89.36} $\pm$ 4.53 & \textbf{68.53} $\pm$ 4.24 \\

\bottomrule 
\end{tabular}
\label{tbl:clustering}
\end{center}
\end{table*}

\renewcommand{\methVAE}{VAE}
\renewcommand{\methGVAE}{GVAE}
\renewcommand{\methMLVAE}{MLVAE}

\begin{table*}[t]
\caption{One-shot identifica\fontsize{9pt}{10.5pt}tion accuracy (\%) for ShapeNet.     }
\fontsize{9pt}{10.5pt}\selectfont
\begin{center}
\begin{tabular}{l@{\;\;}cccc}
\toprule%
Methods & 3 classes (3705 objs.) & 5 classes (4977 objs.) & 10 classes (6210 objs.)\\
\midrule%
\chancelevel & \textrm{0.03}  & \textrm{0.02}  & \textrm{0.02} \\
\methVAE & \textrm{2.17} $\pm$ 0.03$^{*}$ & \textrm{3.49} $\pm$ 0.04$^{*}$ & \textrm{3.43} $\pm$ 0.02$^{*}$ \\
\methMVAE & \textrm{2.31} $\pm$ 0.04$^{*}$ & \textrm{3.71} $\pm$ 0.06$^{*}$ & \textrm{3.55} $\pm$ 0.06$^{*}$ \\
\methMLVAE & \textrm{24.00} $\pm$ 0.43$^{*}$ & \textrm{20.30} $\pm$ 0.26$^{*}$ & \textrm{17.93} $\pm$ 0.29$^{*}$ \\
\methGVAE & \textrm{24.51} $\pm$ 0.44$^{*}$ & \textrm{20.30} $\pm$ 0.24$^{*}$ & \textrm{17.91} $\pm$ 0.20$^{*}$ \\
\methMGVAE & \textbf{27.33} $\pm$ 0.55 & \textbf{24.51} $\pm$ 0.68 & \textbf{21.79} $\pm$ 0.71 \\

 \bottomrule %
\end{tabular}
\label{tbl:oneshot}
\end{center}
\end{table*}

For the first set of experiment, we derived a dataset of multi-viewed object images from 3D models in ShapeNet database \citep{Chang2015}.  The dataset consisted of 10 pre-defined object classes: car, chair, table, airplane, lamp, boat, box, display, truck, and vase.  Within each class, we had a large number of objects with specific shapes, which we distinguished by object identities.
We rendered each object in 30 views (3-dimensional viewpoints) in a single lighting condition.  We split the training and test sets, which consisted of 21888 and 6210 object identities, respectively.  (See Supplementary Materials for more details.)  %
We also created subset versions with 3 or 5 object classes.  %
For training data, we formed groups of  images of the same object in random 3 views ($K=3$), though our approach works for any choice of group size of $2$ or larger (Section~\ref{sec:groupsize})  We used object identity labels (not class labels) for grouping.  Importantly, however,  after this step, we never used any label during training.  %

To train a {\CIGMO} model, we used the following setting.  We set the number of categories in the model either to the number of classes in the data ($C=3,5,$ or $10$), or to a larger number, depending on the task.   %
We set the shape dimension $M=100$ and the view dimension $L=3$ (where a very low view dimension was used to avoid the view variable taking over all the input information and thereby the shape variable becoming degenerate).  For the categorizer, encoder, and decoder deep nets, we adopted commonly used architectures of convolutional or deconvolutional neural networks.  Since the model had so many deep nets, a large part of the networks was shared to save the memory space.  For simplicity, we fixed the category prior $\pi_c=1/C$. %
Supplementary Materials give more details on the architecture.  
For optimization, we used Adam \citep{Kingma2015} with mini-batches of size $100$ and ran 20 epochs.  %

We evaluated the trained models using test data, which were ungrouped and contained objects of the same classes as training data but of different identities.  Below, we describe the evaluation in three parts: (1) invariant clustering task, (2) one-shot identification task, and (3) category-wise shape-view disentangling.  Supplementary Materials give comparison of the model variants discussed in Section~\ref{sec:variants}.

\subsubsection{Invariant clustering}

\begin{table*}[t]
\caption{Degree of shape-view disentanglement for ShapeNet by three measures.  Left: swapping error (lower is better); middle: neural network classification accuracy (\%) for object identity from the shape (higher is better); right: that for view variable (lower is better); weighted average over categories.  }
\fontsize{9pt}{10.5pt}\selectfont
\begin{center}
\begin{tabular}{lcccccc}
\toprule%
& \multicolumn{2}{c}{swapping error} & \multicolumn{2}{c}{shape $\rightarrow$ id (\%)} & \multicolumn{2}{c}{view $\rightarrow$ id (\%)}  \\
Methods & 5 classes & 10 classes & 5 classes & 10 classes & 5 classes & 10 classes \\
\midrule%
\methMLVAE & \textrm{0.497} $\pm$ 0.005$^{*}$ & \textrm{0.554} $\pm$ 0.010$^{*}$ & \textrm{47.30} $\pm$ 0.78$^{*}$ & \textrm{42.96} $\pm$ 0.74$^{*}$ & \textrm{0.60} $\pm$ 0.05 & \textrm{0.52} $\pm$ 0.08 \\
\methGVAE & \textrm{0.491} $\pm$ 0.004$^{*}$ & \textrm{0.549} $\pm$ 0.005$^{*}$ & \textrm{48.34} $\pm$ 0.55$^{*}$ & \textrm{43.87} $\pm$ 0.30$^{*}$ & \textbf{0.56} $\pm$ 0.05 & \textbf{0.48} $\pm$ 0.04 \\
\methMGVAE & \textbf{0.300} $\pm$ 0.025 & \textbf{0.340} $\pm$ 0.035 & \textbf{50.91} $\pm$ 0.97 & \textbf{46.28} $\pm$ 0.87 & \textrm{0.65} $\pm$ 0.05$^{*}$ & \textrm{0.67} $\pm$ 0.07$^{*}$ \\

 \bottomrule %
\end{tabular}
\label{tbl:shape-view}
\end{center}
\end{table*}

The goal of this task is to perform clustering of input images regardless of the view.  Since our learning method has already estimated  the latent category variable without using labels, the rest is to simply infer the most probable category from a given test image, $\hat{c}=\argmax_c{q(c|\vx)}$.  

For comparison, we considered the following baseline models.  First, we included two group-based disentangling methods, namely,
Group-based VAE (GVAE) \citep{Hosoya2019} and Multi-Level VAE (MLVAE) \citep{Bouchacourt2018}.  These can learn to separately infer shape and view from object images; we performed $k$-means clustering on the learned shape variable.  In addition, we incorporated Invariant Information Clustering (IIC) \citep{Ji2019}, the state-of-the-art, group-based method specialized to invariant clustering.  IIC came in two versions: with and without regularization using 5-times overclustering.  For all the baseline methods above, we gave exactly the same grouped dataset for training.  In addition to these, we examined, as an ablation study, two non-group-based methods: mixture of VAEs and vanilla VAE with $k$-means.

Figure~\ref{fig:clustering}(A) shows an example result from a {\CIGMO} model with 3 categories applied to the 3-class dataset, where each box shows random test images belonging to each estimated category.  This demonstrates a very precise clustering of objects achieved by our model, which is quite remarkable given the large view variation and no category label used during training.   Figures~\ref{fig:clustering}(B) and (C) show analogous examples from GVAE with $k$-means as well as IIC, clearly indicating lower performance compared to {\CIGMO}.  %

For quantitative comparison, we measured the performance of invariant clustering in two standard criteria.  The first is classification accuracy.  For this, since we needed to compute the best category-to-class mapping by the Hungarian algorithm \citep{Munkres1957} and since this step requires the number of categories in the model to equal the number of classes in the data, we used only such models corresponding to each dataset with 3, 5, or 10 classes.  Table~\ref{tbl:clustering} summarizes the results (for 10 model instances for each method).  Generally, {\CIGMO} outperformed the other methods in all cases with a large margin (mostly with statistical significance).  More specifically, first, {\CIGMO} performed better than GVAE or MLVAE with $k$-means, which shows that category information can be more clearly represented with multiple category-specific shape representations, compared to a general shape representation with a conventional clustering method.  Second, {\CIGMO} superseded IIC, which shows that general modeling of object images with category, shape, and view latent variables can do better than directly solving the specific task.  This was the case both with and without overclustering; in fact, we could not find a consistent improvement by overclustering in IIC, contrary to the claim by \citet{Ji2019}.  Third, {\CIGMO} showed significantly higher scores than mixture of VAEs, which confirms the large impact of group-based learning.  Taken together, these results demonstrate the efficacy of the categorical invariant representation in the invariant clustering task.  

The second criterion is Adjusted Rand Index (ARI), which provides a similarity measure between sample-to-category and sample-to-class assignments \citep{Hubert1985}.  Since this criterion allows any number of categories in the model, we assessed the change of scores for different numbers of categories.  Figure~\ref{fig:ari} shows the results for the 3-class and 10-class datasets.  Generally, all methods decreased performance while the number of categories increased.  However, {\CIGMO} reasonably retained the score even for larger numbers of categories, while the other methods exhibited a sharp drop.  This is because {\CIGMO} tends to use just a necessary number of categories and leave the remaining categories as degenerate (to which no input belongs), whereas the other methods try to find out as many clusters as possible and thus always use all categories.  This property of {\CIGMO} would be useful in a realistic setting where the number of true categories is unknown {\it a priori}.

\begin{figure}[t]
\begin{center}
\includegraphics[width=4cm]{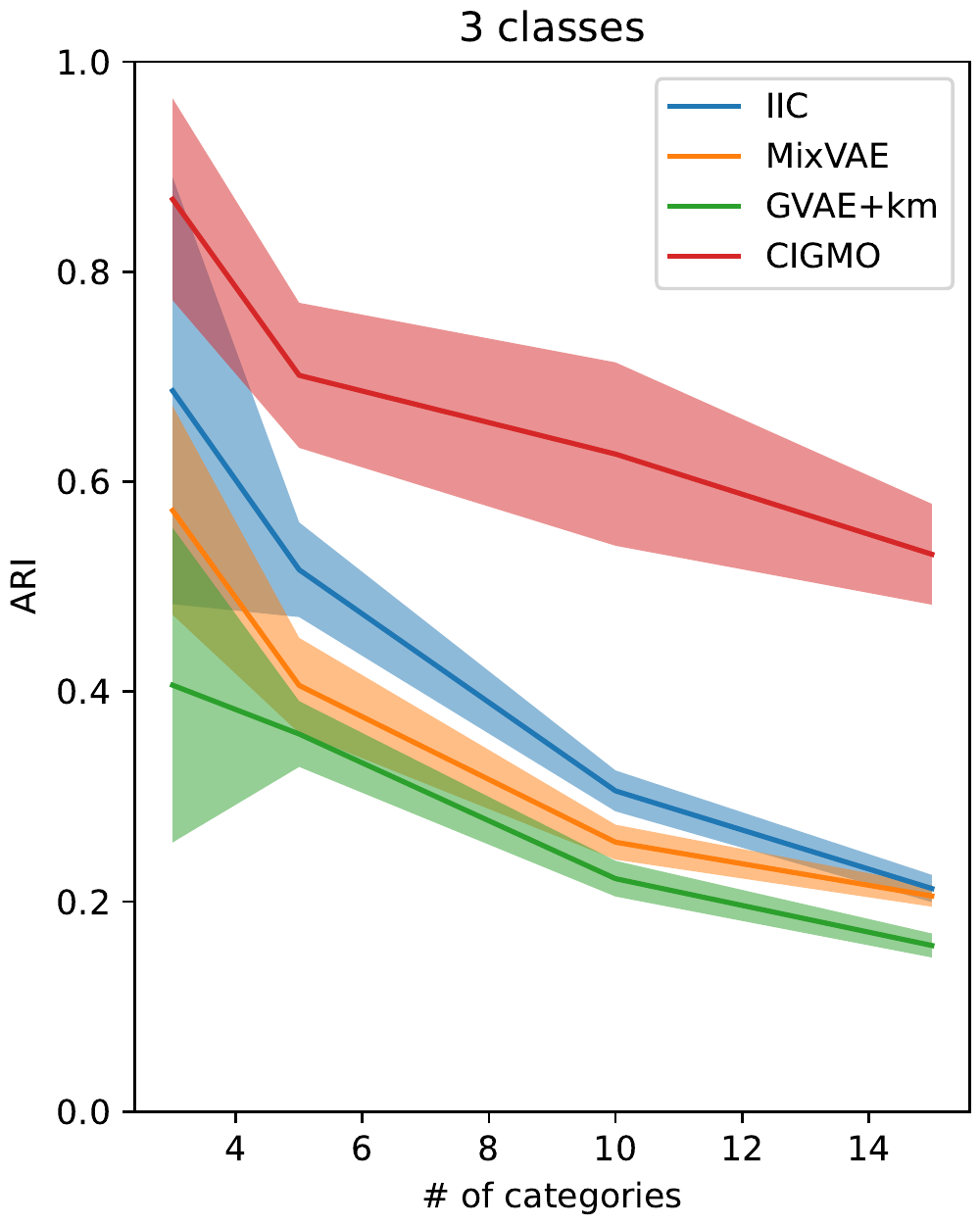}
\includegraphics[width=4cm]{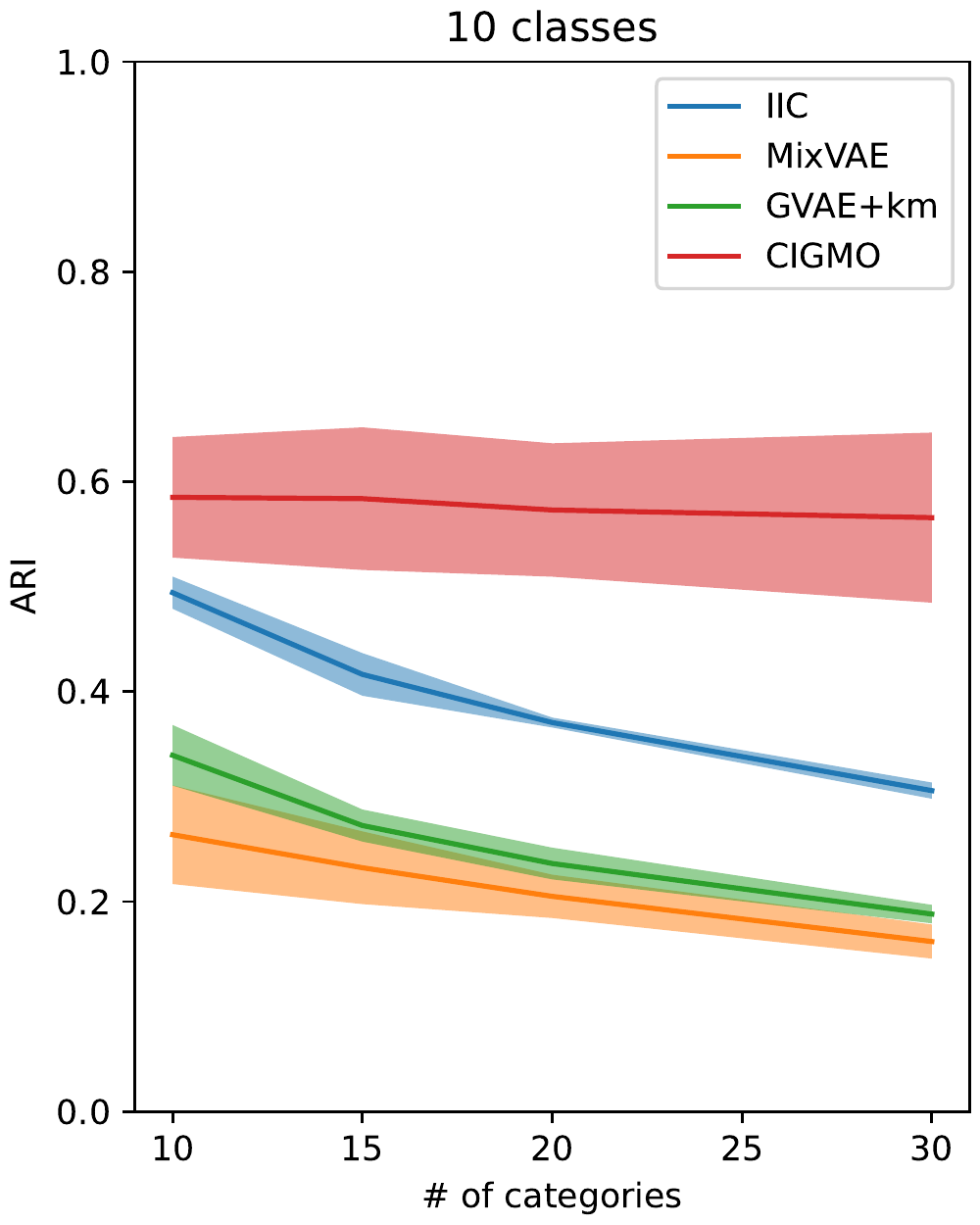}
\caption{Adjusted Rand Index for ShapeNet while increasing the number of categories in the model (line: mean, shade: SD).  %
}
\label{fig:ari}
\end{center}
\end{figure}

\subsubsection{One-shot identification}

The goal of this task is to perform object recognition given only one example per identity.  More precisely, we randomly pick up a subset of the test images consisting of exactly one image for each object identity and then identify the objects of the remaining test images.  Since the test objects are disjoint from the training objects, we deal with only unseen objects for the model.  This task can measure the strength of view-invariance of shape representation: if shape code is perfectly invariant in view, then all images of the same object should be mapped to an identical point in the shape space.  Note that our purpose here is not to infer the class but the object identity, unlike invariant clustering.  

Thus, we compared overall accuracy of one-shot identification for {\CIGMO} and other models.  For this, we performed a nearest-neighbor method according to Euclidean distance in the shape space.  Here, the shape space was defined depending on the method.  For GVAE or MLVAE, the shape variable $\vz=h(\vx)$ directly defined the shape space.  For {\CIGMO}, since the shape representation depended on the category, we first defined $\vz^c=h_c(\vx)$ for $c=\hat{c}$ (where $\hat{c}$ is inferred from the input) and $\vz^c=0$ otherwise, and then concatenated them together: $[\vz^1,\ldots,\vz^C]$.  This gave us category-dependent shape vectors that could be directly compared.  For VAE or mixture of VAEs, we used the entire latent variable in place of shape variable.  For simplicity, we again used models with the same number of categories as the image classes.

Table~\ref{tbl:oneshot} summarizes the results.  Overall, {\CIGMO} performed the best among the compared methods in all cases (with statistical significance).  In particular, it performed significantly better than GVAE and MLVAE, which indicates that shapes can be represented more precisely with category specialization than without.  In addition, our model outperformed, by far, mixture of VAEs, showing the successful disentanglement of shape from view.  These results, again, indicate the advantage of the categorical and invariant representations in the one-shot identification task.  (Note also that the scores were remarkably high even for up to 6210-way classification by one shot.)  

\subsubsection{Category-wise shape-view disentanglement}
\label{sec:feature}

{\CIGMO} provides category-specific shape representations that are disentangled from views.  We quantified the degree of disentanglement in two ways.  First, we generated a number of ``swapping'' images by the decoder from the view of one image and the shape of another, and calculated the (normalized) mean squared error between the generated images and the ground truths (swapping error).  Second, we measured how much information the shape or view variable contained on object identity, for which we trained two-layer neural networks on either variable for classification (identity inference) \citep{Mathieu2016,Bouchacourt2018}.  A better disentangled representation would give a higher accuracy from the shape variable and a lower accuracy from the view variable.  For each analysis, we conducted the measurement for each category using the belonging test images and took the average weighted by the number of those images.  Table~\ref{tbl:shape-view} summarizes the results.  Overall, {\CIGMO} tended to give better results than the baselines, except for object information in the view variable (though the compared accuracies were all negligibly low).

\subsubsection{Group size variation}
\label{sec:groupsize}

\begin{figure}[t]
\begin{center}
\includegraphics[width=4cm]{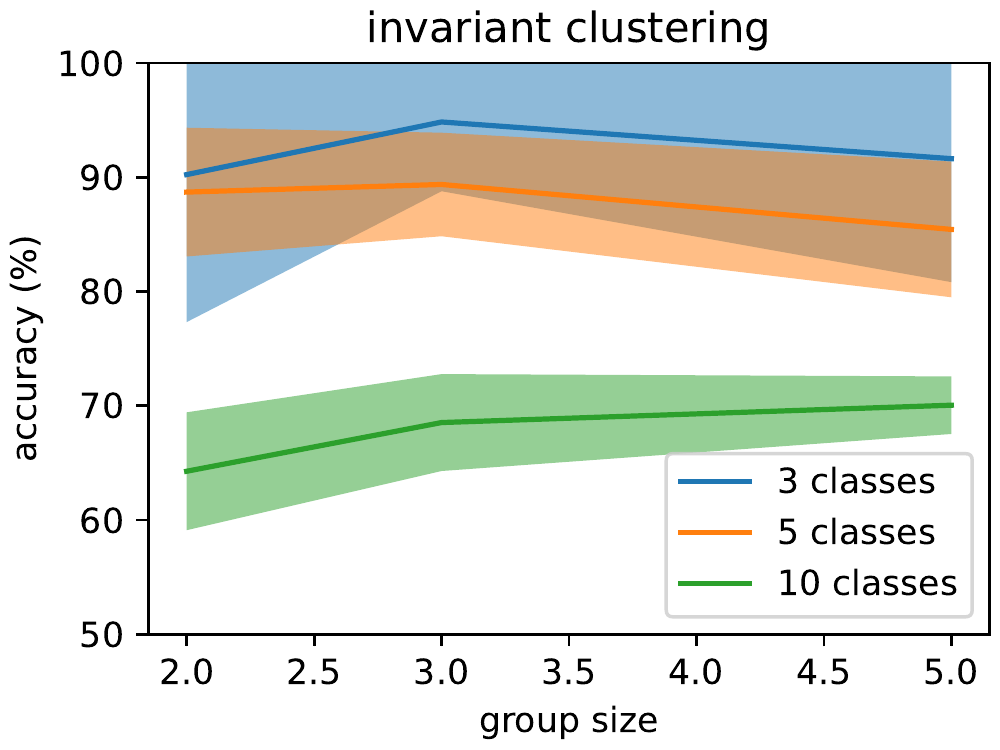}
\includegraphics[width=4cm]{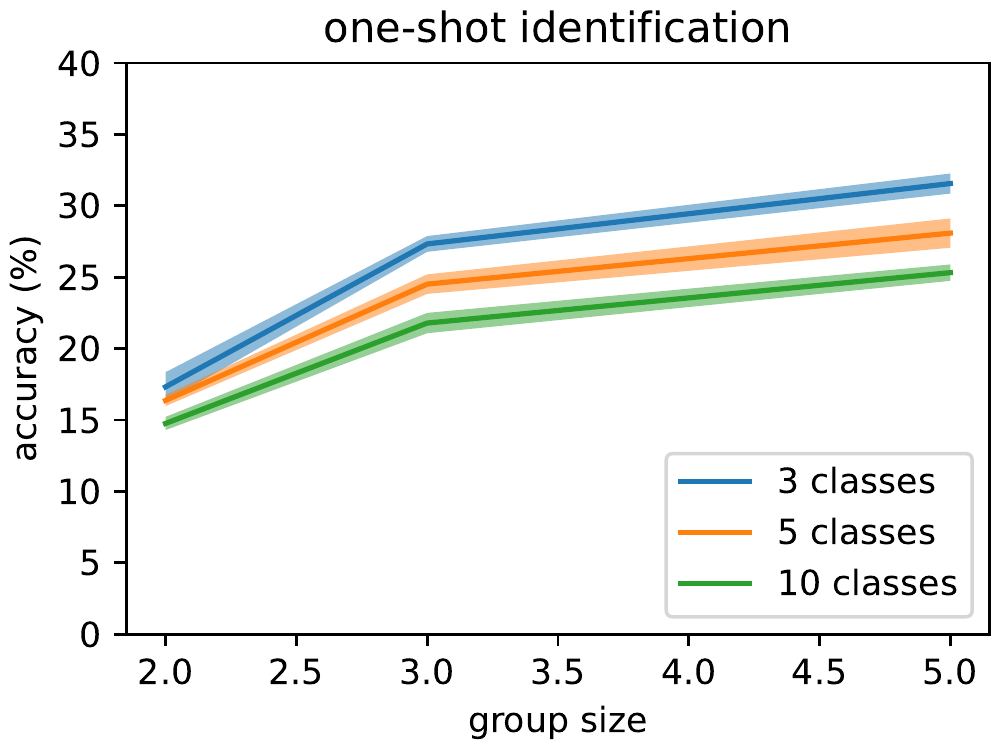}
\includegraphics[width=4cm]{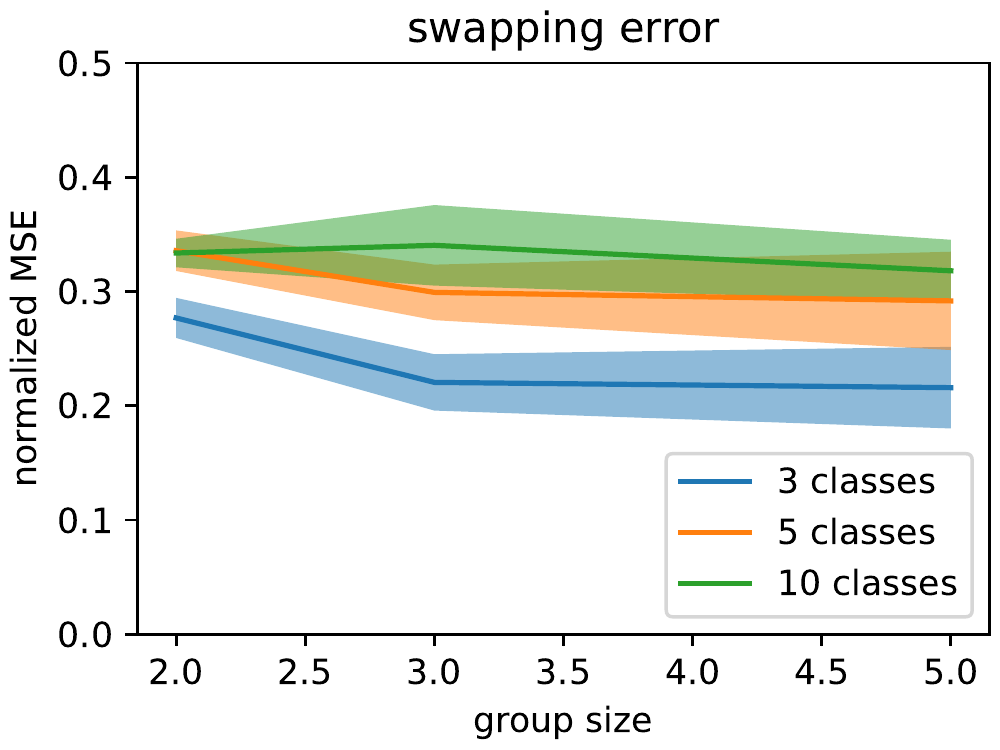}
\includegraphics[width=4cm]{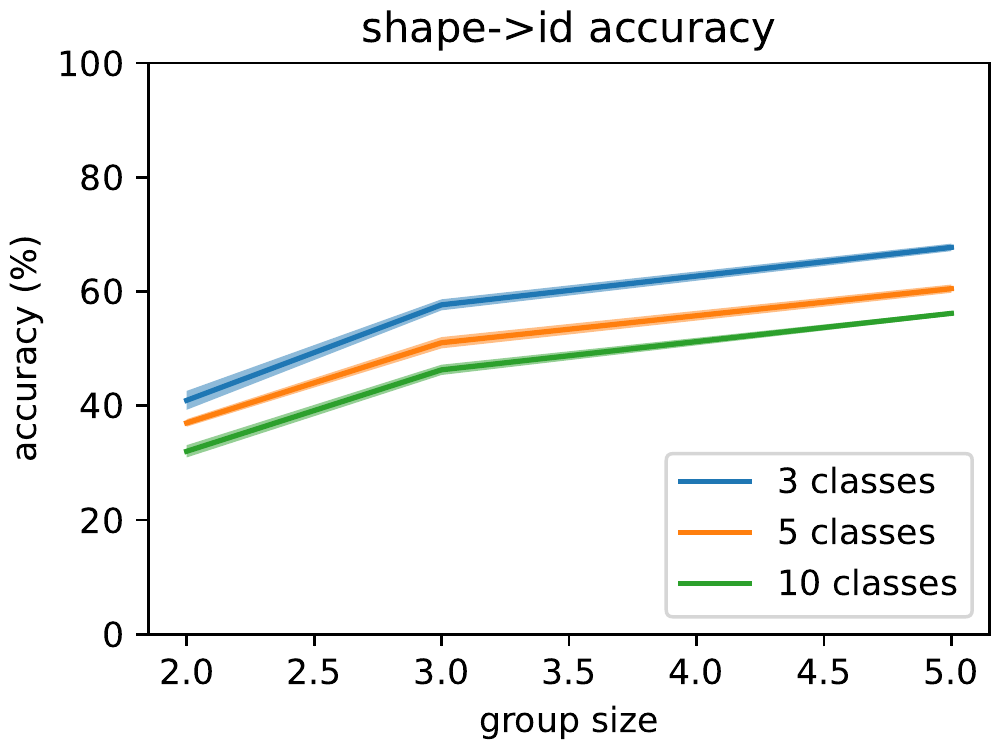}
\caption{Comparison of performance in invariant clustering, one-shot identification, swapping, and shape-to-identity when the group size is varied. %
}
\label{fig:groupsize}
\end{center}
\end{figure}

In Section~\ref{sec:shapenet}, we have argued that our model works as long as the group size is two or larger.  This is because the shape should be in principle the same no matter how many instances are in each group.  However, a larger group size could more easily stabilize the model training, thus potentially changing the result quantitatively.  Figure~\ref{fig:groupsize} shows the change of performance while the group size was varied.  The result indeed shows that increasing the group size tends to slightly improve the performance (except for the saturation in invariant clustering for fewer classes).

\begin{figure*}[t]
\begin{center}
\includegraphics[width=14cm]{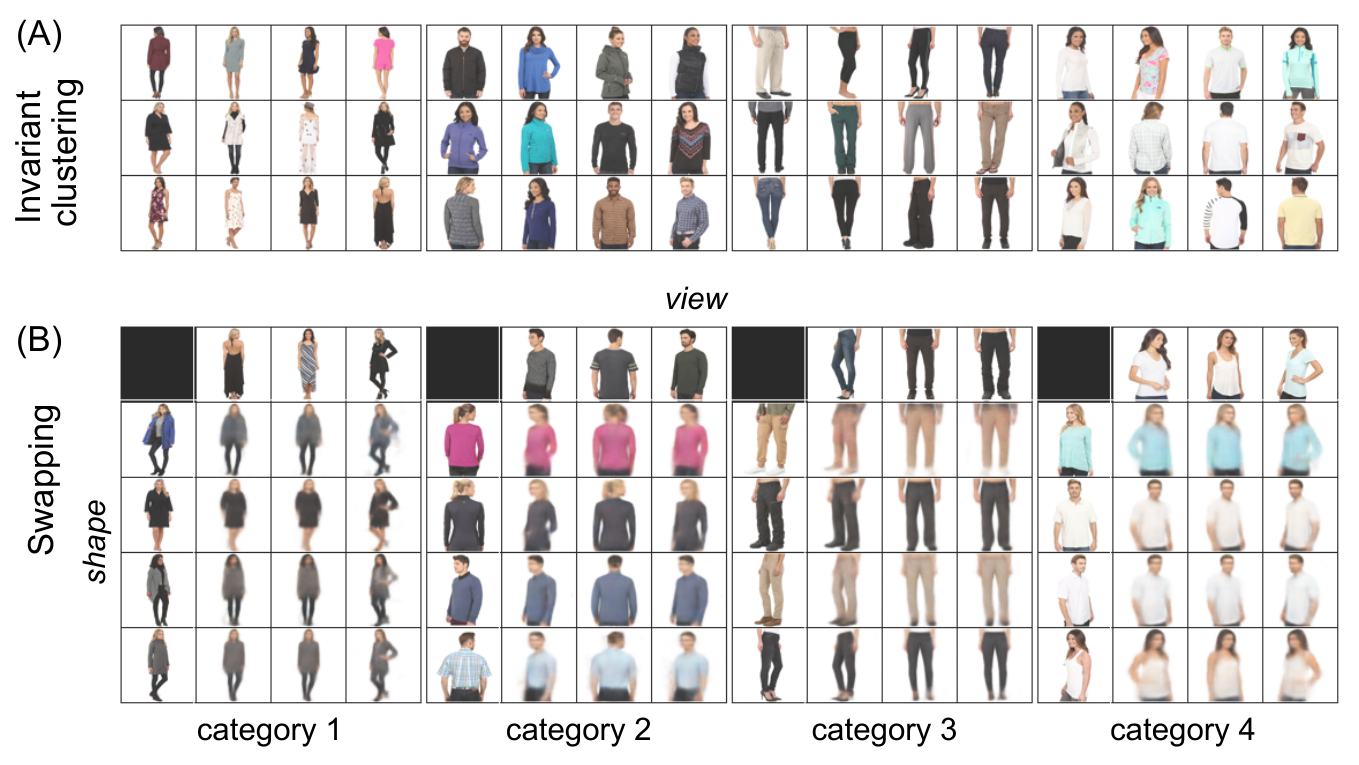}
\caption{Results from a {\CIGMO} model trained for MVC Cloth dataset.  (A) Invariant clustering.  Random images belonging to each estimated category are shown in a box.  (B) Swapping.  For each category, an image is generated from the view of one image in the top row and the shape of another image in the left column.
}
\label{fig:mvc}
\end{center}
\end{figure*}

\subsection{MVC Cloth}
\label{sec:mvc}

For the second set of experiment, we used a dataset of multi-viewed clothing images based on MVC Cloth dataset \citep{Liu2016a}.  The dataset contains a number of photos of cloths worn by fashion models and taken in multiple viewpoints (Supplementary Materials).  Unlike ShapeNet, this dataset provides no class label.  Therefore our focus here is to demonstrate what kind of unknown but interesting categories can be discovered by our model.  We split the dataset into the training and test sets each consisting of $\sim$112K and $\sim$28K images with disjoint cloth types; the training images were grouped for the same cloth types (group size 3).  
For model training, we arbitrarily set the number of categories to $7$ in a {\CIGMO}.  Other training conditions were identical as before.

Figure~\ref{fig:mvc}(A) illustrates an invariant clustering result, where only 4 categories were shown as the other 3 categories were degenerate.  By inspection, category~1 represents whole-body cloths like suits and dresses; category~2 represents tops like sweaters, jackets, blouses, and shirts typically with long sleeves; category~3 represents bottoms like trousers and jeans; category~4 represents tops like shirts typically with short sleeves.  
To characterize the categories more systematically, we used the attribute information provided by the dataset (264 boolean attributes for each image; see Supplementary Materials).  For each estimated category and each attribute label, we calculated F1-score to measure their relevance \citep{Fawcett2006}.  Table~\ref{tbl:mvc-attrs} gives top 10 most relevant attributes to each category; we can see good matching of the result with the visual impression from Figure~\ref{fig:mvc}(A).

Figure~\ref{fig:mvc}(B) shows a swapping result from the same {\CIGMO} model, where each matrix corresponds to a category and shows the images generated from the shape of one image (left column) and the view of another (top row).  We can see that the shape and view representations are well disentangled in each category, as the generated images are clearly aligned for the shape in rows and for the view in columns.\footnote{Note that our goal here is not to generate sharp images.  Generally, it is well known that, compared to VAE-based methods, GAN-based methods tend to generate sharper but often more corrupted images especially when the number of training images is not so large, e.g., \citep[Figs. 7 and 12]{Chen2018}.}

\begin{table}[h]
\caption{Top 10 most relevant attributes to each category in the model in Figure~\ref{fig:mvc}.  Each attribute name is shown with the F1-score in parentheses.
}
\fontsize{9pt}{10.5pt}\selectfont
\begin{center}
\begin{tabular}{c@{\;\;}p{6.5cm}}
\toprule%
Catg. & Attributes (F1-scores) \\
\midrule%
1 & Short (0.45); Sleeveless (0.33); hundred2O (0.32); Polyester (0.32); AlineDresses (0.31); SheathDresses (0.31); Black (0.29); hundred2U (0.28); KneeLength (0.24); Nylon (0.20); \\
2 & hundred2U (0.65); LongSleeves (0.63); hundred1U (0.57); Polyester (0.55); Pullover (0.47); Cotton (0.47); TShirts (0.39); Black (0.38); fiftyU (0.31); Nylon (0.28); \\
3 & Denim (0.49); Cotton (0.38); hundred2U (0.34); hundred1U (0.33); Polyester (0.27); Black (0.27); PullOn (0.27); Blue (0.27); ContrastStitching (0.26); Leggings (0.26); \\
4 & White (0.57); ShortSleeves (0.27); Pullover (0.25); TShirts (0.24); fiftyU (0.19); hundred1U (0.19); Crew (0.19); ButtonUpShirts (0.17); hundred2U (0.17); Cotton (0.16); \\
\bottomrule 
\end{tabular}
\label{tbl:mvc-attrs}
\end{center}
\end{table}

\section{Conclusion}
\label{sec:concl}

In this paper, taking inspiration from the primate higher vision, we have proposed {\CIGMO} as a deep generative model that has category-modular shape representations in disentanglement with views, which can learn to represent category, shape, and view latent variables with group-based weak supervision.  We have shown empirical representational advantages that allow our model to outperform previous methods in multiple down-stream tasks such as invariant clustering and one-shot identification.  
One drawback is scalability: the per-step time complexity proportional to the number of categories, due to the reparametrization trick incompatible with category variables (Section~\ref{sec:learning}).  The Gumbel-Softmax technique is well-known for this type of problem \citep{Jang2017}, but did not work in our case for an unknown reason, though not so surprising since it is only a heuristics.  

Future investigation may include improvements in overall task performance as well as image generation quality, possibly with the aid of adversarial learning \citep{Goodfellow2014,Mathieu2016,Chen2018}, and application to more realistic settings.
Lastly, back to our original inspiration, we are keen to pursue the biological relationship of {\CIGMO} to the primate higher visual cortex, continuing our previous investigations \citep{Hosoya2017,Raman2020}.

\begin{acknowledgements} %
This work has been supported by Commissioned Research of NICT (1940201) and Grants-in-Aid for Scientific Research (19H04999 and 21K19812). 
 \end{acknowledgements}

\bibliography{library}

\clearpage

\appendix

\vspace{1cm}
\noindent{\Large\bf Supplementary Materials}

\section{Dataset details}
\label{sec:dataset}

\paragraph{ShapeNet}

We used an object image dataset derived from a core subset of (public-domain) ShapeNet database of 3D object models \citep{Chang2015} used in SHREC2016 challenge\footnote{\url{https://shapenet.cs.stanford.edu/shrec16/}}.  The subset contained 55 object classes and did not include material data.  We selected 10 out of the pre-defined classes: car, chair, table, airplane, lamp, boat, box, display, truck, and vase.  Our criterion of selection was those with a large number of object identities but avoid including visually similar classes, e.g., chair, sofa, and bench.  We rendered each object in 30 views consisting of 15 azimuths (equally dividing $360^\circ$) and 2 elevations ($0^\circ$ and $22.5^\circ$ downward) in a single lighting condition; the images were gray-scale and had size $64\times 64$ pixels.  All the rendering used Blender software\footnote{\url{https://www.blender.org}}.  We divided the data into training and test following the split given in the original database. 

\paragraph{MVC Cloth}

We used a subset of MVC Cloth image dataset \citep{Liu2016a}, which contains a number of photos of cloths worn by fashion models; the same cloth type is shown in multiple viewing angles.  The dataset provides no class label, but provides 264 binary attribute labels that are related to cloth kinds (Dresses, Denim, TShirts, etc.), cloth styles (Short, Sleeveless, LongSleeves, etc.), cloth materials (Cotton, Nylon, Black, White, etc.), and prices (hundred1U, fiftyU, etc.).  We rescaled the images to $64\times 64$ pixels and split the data into training and test sets (each of size $\sim$112K and $\sim$28K)  so that the cloth types do not overlap between these.

\section{Platform details}

We used 3 computers with the following specifications: (1) 56 core CPUs (256G memory) with 4 V100 GPUs (16G memory each), (2) 56 core CPUs (256G memory) with 4 V100 GPUs (32G memory each), and (3) 96 core CPUs (256G memory) with 4 A100 GPUs (40G memory each).  All code is implemented with Python (3.7.4) / Pytorch (1.7.1).

\section{Architecture details}
\label{sec:arch}

In a {\CIGMO} model, the categorizer deep net $u$ consisted of three convolutional layers each with $32$, $64$, and $128$ filters (kernel $5\times 5$; stride $2$; padding 2), followed by two fully connected layers each with $500$ intermediate units and $C$ output units.  These layers were each intervened with Batch Normalization and ReLU nonlinearity, except that the last layer ended with Softmax.  The shape and view encoder deep nets had a similar architecture, except that the last layer was linear for encoding the mean ($g$ and $h_c$) or ended with Softplus for encoding the variance ($r$ and $s_c$).   The decoder deep nets $f_c$ had two fully connected layers ($103$ input units and $500$ intermediate units) followed by three transposed convolutional layers each with $128$, $64$, and $32$ filters (kernel $6\times 6$; stride $2$; padding $2$).  These layers were again intervened with Batch Normalization and ReLU nonlinearity, but the last layer was Sigmoid.
To save the memory space, the shape encoders shared the first four layers for all categories and for mean and variance.  %
The decoders shared all but the first layer for all categories.

\section{Additional results for ShapeNet}
\label{sec:design}

In Section~\ref{sec:learning} and Section~\ref{sec:variants}, we have raised several design alternatives in the model construction.  The first choice regards how to combine instance-specific categorical probability distributions and has three options: averaging (default), normalized product, and logit averaging.   The second choice regards whether views are dependent on category or not (default).  Table~\ref{tbl:design} summarizes performance results in invariant clustering and one-shot identification tasks, changing the options from the default, for ShapeNet.  Overall, the default design tended to give slightly better performance than the other options (though mostly statistically insignificant in invariant clustering).  In particular, we could not see any advantage of using category-dependent view representations despite potentially different meanings of views.  
In addition, Tables~\ref{tbl:shape-view-alt} and~\ref{tbl:swapping-alt} compare the design options in terms of degree of shape-view disentanglement and swapping errors, respectively.  The results again show that the default design tended to give slightly better performance than the other options (though often statistically insigificant).
As an additional remark,  we found the product option numerically rather unstable.

\renewcommand{\methMGVAE}{AV & N}
\newcommand{\methMGVAEap}{PD & N}
\newcommand{\methMGVAEal}{LA & N}
\newcommand{\methMGVAEmv}{AV & Y}

\begin{table*}[t]
\caption{Comparison of design options in {\CIGMO} in terms of (1) how to combine category distributions: averaging (AV; default), product (PD), and logit-averaging (LA) and (2) whether views depend on the category or not (default).  The comparisons are made with invariant clustering accuracy (left half; \%) and one-shot identification accuracy (right half; \%) for ShapeNet. }
\fontsize{8.5pt}{10pt}\selectfont
\begin{center}
\begin{tabular}{cccccccc}
\toprule%
&& \multicolumn{3}{c}{invariant clustering (\%)} & \multicolumn{3}{c}{one-shot identification (\%)} \\
(1) & (2) & 3 classes & 5 classes & 10 classes & 3 classes & 5 classes & 10 classes \\
\midrule%
\methMGVAE & \textbf{94.83} $\pm$ 6.06 & \textbf{89.36} $\pm$ 4.53 & \textrm{68.53} $\pm$ 4.24 & \textbf{27.33} $\pm$ 0.55 & \textbf{24.51} $\pm$ 0.68 & \textbf{21.79} $\pm$ 0.71 \\
\methMGVAEap & \textrm{90.44} $\pm$ 9.11 & \textrm{80.96} $\pm$ 9.65 & \textrm{68.30} $\pm$ 3.64 & \textrm{26.64} $\pm$ 0.57$^{*}$ & \textrm{23.55} $\pm$ 0.56$^{*}$ & \textrm{20.63} $\pm$ 0.80$^{*}$ \\
\methMGVAEal & \textrm{88.16} $\pm$ 15.89 & \textrm{89.16} $\pm$ 3.08 & \textbf{69.55} $\pm$ 3.26 & \textrm{26.52} $\pm$ 0.84$^{*}$ & \textrm{23.57} $\pm$ 0.34$^{*}$ & \textrm{20.58} $\pm$ 0.91$^{*}$ \\
\methMGVAEmv & \textrm{92.32} $\pm$ 8.39 & \textrm{82.60} $\pm$ 6.53$^{*}$ & \textrm{65.03} $\pm$ 6.62 & \textrm{26.54} $\pm$ 0.61$^{*}$ & \textrm{24.16} $\pm$ 0.58 & \textrm{21.24} $\pm$ 1.00 \\

 \bottomrule %
\end{tabular}
\label{tbl:design}
\end{center}
\end{table*}

\begin{table*}[t]
\caption{Comparison of different design options in terms of degree of shape-view disentanglement for ShapeNet, measured as neural network classification accuracy (\%) for object identity from the shape (left half; higher is better) or view variable (right half; lower is better).   }
\small
\begin{center}
\begin{tabular}{llcccccc}
\toprule%
&& \multicolumn{3}{c}{shape $\rightarrow$ id} & \multicolumn{3}{c}{view $\rightarrow$ id}\\
& & 3 cats. & 5 cats. & 10 cats. & 3 cats. & 5 cats. & 10 cats.\\
\midrule%
\methMGVAE & \textbf{57.62} $\pm$ 0.93 & \textbf{50.91} $\pm$ 0.97 & \textbf{46.28} $\pm$ 0.87 & \textbf{0.26} $\pm$ 0.04 & \textrm{0.65} $\pm$ 0.05 & \textrm{0.67} $\pm$ 0.07 \\
\methMGVAEap & \textrm{57.18} $\pm$ 1.30 & \textrm{49.13} $\pm$ 1.06$^{*}$ & \textrm{44.17} $\pm$ 1.13$^{*}$ & \textrm{0.27} $\pm$ 0.03 & \textbf{0.63} $\pm$ 0.08 & \textrm{0.69} $\pm$ 0.04 \\
\methMGVAEal & \textrm{56.31} $\pm$ 1.90 & \textrm{49.34} $\pm$ 0.62$^{*}$ & \textrm{43.35} $\pm$ 1.42$^{*}$ & \textrm{0.29} $\pm$ 0.05 & \textrm{0.69} $\pm$ 0.10 & \textrm{0.73} $\pm$ 0.07 \\
\methMGVAEmv & \textrm{55.30} $\pm$ 0.78$^{*}$ & \textrm{48.57} $\pm$ 0.82$^{*}$ & \textrm{44.28} $\pm$ 1.21$^{*}$ & \textrm{0.28} $\pm$ 0.05 & \textrm{0.63} $\pm$ 0.14 & \textbf{0.63} $\pm$ 0.11 \\

 \bottomrule %
\end{tabular}
\label{tbl:shape-view-alt}
\end{center}
\end{table*}

\begin{table*}[t]
\caption{Comparison of different design options in terms of swapping error for ShapeNet.    }
\small
\begin{center}
\begin{tabular}{ccccc}
\toprule%
&& 3 cats. & 5 cats. & 10 cats. \\
\midrule%
\methMGVAE & \textbf{0.220} $\pm$ 0.025 & \textbf{0.300} $\pm$ 0.025 & \textrm{0.340} $\pm$ 0.035 \\
\methMGVAEap & \textrm{0.245} $\pm$ 0.031 & \textrm{0.333} $\pm$ 0.037 & \textrm{0.361} $\pm$ 0.025 \\
\methMGVAEal & \textrm{0.236} $\pm$ 0.031 & \textrm{0.301} $\pm$ 0.019 & \textbf{0.340} $\pm$ 0.025 \\
\methMGVAEmv & \textrm{0.236} $\pm$ 0.047 & \textrm{0.318} $\pm$ 0.038 & \textrm{0.364} $\pm$ 0.060 \\

 \bottomrule %
\end{tabular}
\label{tbl:swapping-alt}
\end{center}
\end{table*}

\renewcommand{\methMGVAE}{{\bf\CIGMO}}

\end{document}